\documentclass{article}

\usepackage{arxiv}

\usepackage[utf8]{inputenc} 
\usepackage[T1]{fontenc}    
\usepackage{hyperref}       
\usepackage{url}            
\usepackage{booktabs}       
\usepackage{amsfonts}       
\usepackage{nicefrac}       
\usepackage{microtype}      
\usepackage{lipsum}

\usepackage{cite}
\usepackage{amsmath,amssymb,amsfonts}
\usepackage{graphicx}
\usepackage{algorithm}
\usepackage[noend]{algpseudocode}
\usepackage{hyperref}
\usepackage{textcomp}
\usepackage{threeparttablex} 

\usepackage{mathtools}
\usepackage{caption}
\usepackage{float}
\usepackage{stmaryrd}
\usepackage{epstopdf}
\usepackage{multirow}
\usepackage{pifont}
\usepackage{caption}
\usepackage{subcaption}
\usepackage{xcolor}

\newcommand{\col}{\textsf{ col}}

\newcommand{\R}{{\mathbb{R}}}

\newcommand{\N}{{\mathbb{N}}}

\newcommand{\tx}{{t_{\times}}}

\newtheorem{theorem}{Theorem}[section]

\newtheorem{assumption}{Assumption}

\newtheorem{remark}[theorem]{Remark}
\newtheorem{problem}[theorem]{Problem}
\newtheorem{proof}[theorem]{Proof}

\title{Learning from Demonstration via Spatiotemporal Tubes for Unknown Euler–Lagrange Systems
\thanks{ This work was supported in part by the SERB Start-Up Research Grant; in part by the ARTPARK. The work of Ratnangshu Das was supported by the Prime Minister’s Research Fellowship from the Ministry of Education, Government of India.}
}

\author{
 Ratnangshu Das \\
  Robert Bosch Centre for Cyber-Physical Systems\\
  IISc, Bengaluru, India\\
  \texttt{ratnangshud@iisc.ac.in} \\
   \And
 Puneeth Shankar \\
  Robert Bosch Centre for Cyber-Physical Systems\\
  IISc, Bengaluru, India\\
  \texttt{puneethz1z3z2@gmail.com} \\
  \And
 Varuni Buereddy \\
  Robert Bosch Centre for Cyber-Physical Systems\\
  IISc, Bengaluru, India\\
  \texttt{varunib@iisc.ac.in} \\
  \And
 Ravi Prakash \\
  Robert Bosch Centre for Cyber-Physical Systems\\
  IISc, Bengaluru, India\\
  \texttt{ravipr@iisc.ac.in} \\
  \And
 Pushpak Jagtap \\
  Robert Bosch Centre for Cyber-Physical Systems\\
  IISc, Bengaluru, India\\
  \texttt{pushpak@iisc.ac.in} \\
}

\begin{document}
\maketitle

\begin{abstract}
 We present STT-LfD, a unified Learning from Demonstration (LfD) framework that integrates motion learning with control for unknown Euler–Lagrange systems. Unlike traditional decoupled approaches that track a fixed reference, the proposed method treats demonstrations as a data-driven safety specification. Using heteroscedastic Gaussian Processes, STT-LfD learns Spatiotemporal Tubes (STTs) as an intent envelope that capture time-varying precision requirements of a task. A closed-form feedback controller then enforces these learned constraints while respecting actuator limits, without requiring explicit system identification. The approach preserves the temporal structure of demonstrations, remains computationally efficient, and avoids explicit system identification. Hardware experiments on a mobile robot and a 7-DOF manipulator show that it outperforms baselines in robustness to disturbances and computational speed. \href{https://tinyurl.com/STT-LfD}{Video Link}perturbed conditions.
\end{abstract}

\section{Introduction}

Robots operating in safety-critical and unstructured environments often need to perform complex skills that are easier to demonstrate than to manually program. Learning from Demonstration (LfD) offers a natural way to transfer human expertise directly into robotic policies \cite{ravichandar2020recent, celemin2022interactive}. However, traditional LfD architectures typically suffer from a rigid decoupling: a high-level motion generation stage (e.g., DMP \cite{ijspeert2002movement, schaal2006dynamic} or KMP) produces a reference trajectory, which a separate low-level controller (e.g., LQR \cite{silverio2019uncertainty} or MPC \cite{hu2019mobile}) then attempts to track. This separation is inherently fragile, as the motion generator ignores the hardware's physical constraints, while the tracker treats all trajectory points with uniform importance. 
As a result, these approaches fail to capture what we call the intent envelope: a time-varying spatial tolerance implicitly expressed by the expert. For example, during precise manipulation (e.g., threading a needle), this envelope is narrow and restrictive, whereas in free-space motion, it can be much wider.

In this letter, we address this limitation by introducing STT-LfD, a unified framework that bridges the gap between probabilistic motion representation and robust control for unknown Euler-Lagrange systems. We use heteroscedastic Gaussian Processes (HGPs) \cite{heteroscedastic} to learn Spatiotemporal Tubes (STTs) \cite{das2025spatiotemporal, das2025real} from demonstrations that capture both the nominal task behavior and its time-varying precision requirements. 
{However, previous STT formulations primarily focused on satisfying formally specified temporal logic constraints through optimization- or learning-based tube synthesis. In contrast, the proposed framework constructs STTs directly from expert demonstrations, allowing the learned tubes to capture both nominal task behavior and demonstration variability.}
The learned tubes naturally distinguish between task-critical regions, where tracking must remain precise, and flexible regions where higher variance is allowed.

Unlike conventional LfD methods that require explicit system identification and real-time optimization, as in LQR \cite{silverio2019uncertainty}, MPC \cite{hu2019mobile}, and CBFs \cite{robey2020learning}, the proposed STT-LfD framework provides a closed-form execution strategy that is mathematically coupled with the learned tube and does not require parametric model identification. This unified design allows the robot to satisfy the spatial and temporal requirements of the demonstrated task, while respecting actuator limits and remaining robust to disturbances, even when the system parameters are unknown.
\begin{figure*}[t]
    \centering
    \includegraphics[width=0.9\linewidth]{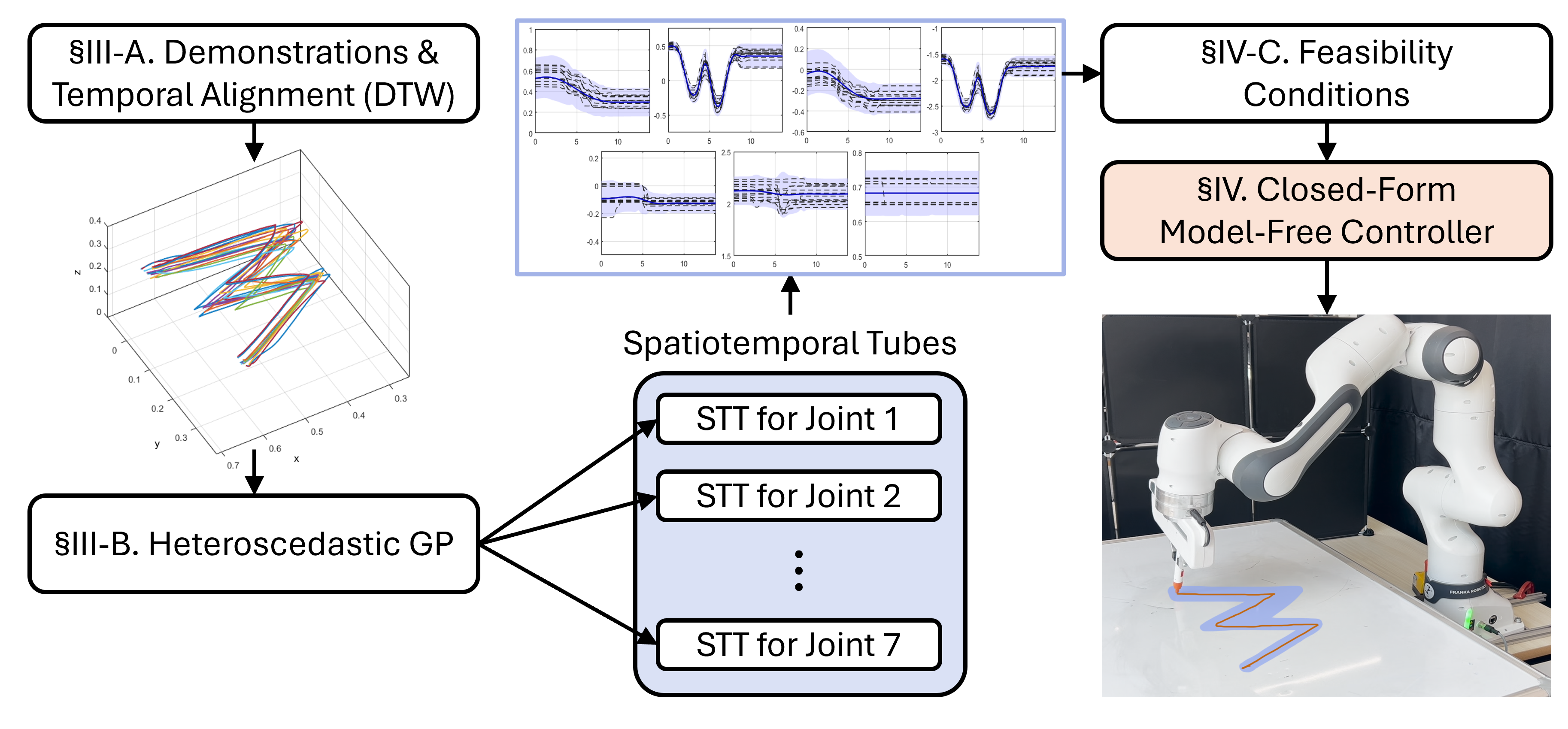}
    \caption{STT-LfD framework: Expert demonstrations are aligned using DTW and spatiotemporal tubes are obtained using HGPs. A closed-form controller then enforces tube invariance for safe and robust task execution under input constraints.
    }
    \label{fig:stt_lfd_pipeline}
\end{figure*}
The main contributions of this letter are summarized as follows:
{
\begin{enumerate}
\item 
\textbf{Unified Task Specification:} 
We propose a Learning from Demonstration framework that learns Spatiotemporal Tubes (STTs) directly from expert demonstrations using heteroscedastic Gaussian Processes. The learned STTs capture the time-varying precision requirements of the task and define admissible task executions as time-varying regions rather than a single reference trajectory.
\item 
\textbf{Closed-Form Control under Input Constraints:} 
We develop a closed-form control strategy that guarantees invariance of the learned STTs while respecting actuator limits. The proposed framework does not require explicit system identification or online optimization, making it computationally efficient for real-time robotic systems.
\item 
\textbf{Experimental Validation:} 
We validate the proposed framework through hardware experiments on a mobile robot and a 7-DOF FR~3 manipulator \cite{FRANKA}. The results show significant improvement in tracking error under disturbances (externally applied jerks and variations in mass matrix) and a three-order-of-magnitude reduction in computation time compared to existing pipelines.
\end{enumerate}
}



\section{Preliminaries and Problem Formulation}
\label{sec:prelim}

\textit{Notation.}
For $a,b\in\N$ and $a\leq b$, $[a;b]$ denotes a close interval in $\N$. 
The element-wise absolute value of a vector $x \in \R^n$ is denoted by $|x| := \col(|x_1|, \ldots, |x_n|)$ and the Euclidean norm using $\|x\|$. 
For $x, y \in \R^n$, the vector inequalities, $x \preceq y$ (and $x \succeq y$) represents $x_i \leq y_i$ (and $x_i \geq y_i$), $\forall i \in [1;n]$.
$x \uparrow (\downarrow) \ a$ indicates $x$ approaches $a$ from the left (right) side.
All other notation follows standard mathematical conventions.

\subsection{System Definition}
Consider an Euler-Lagrange (EL) system $\mathcal{S}$ described as:
\begin{align}
    \mathcal{S}: M(x)\Ddot{x} + V(x,\dot{x}) + G(x) = \tau + w(t), \label{eqn:sysdyn}
\end{align}
where $x(t) = [x_1(t), \ldots, x_n(t)]^\top \in X \subset \mathbb{R}^n$ is the state, $\tau(t) \in \mathbb{R}^n$ is the control input, and $w(t) \in \mathbb{W} \subset \R^n$ is an unknown external disturbance. $M(x) \in \R^{n \times n}$ denotes the mass matrix, $V(x, \dot{x}) \in \R^n$ represents the Coriolis and centrifugal terms, and $G(x) \in \R^n$ is the gravity vector. For brevity, we omit arguments and parentheses when the functional dependence is clear; for example, $M(x), V(x,\dot{x}), G(x)$ and $w(t)$ are represented as $M,V,G$ and $w$, respectively.

\begin{assumption}
    The mass matrix $M(x)$, the Coriolis and centrifugal terms $V(x, \dot{x})$, the gravity vector $G(x)$ and the external disturbance $w(t)$ are all unknown.
\end{assumption}

For the EL system $\mathcal{S}$, given control input bounds $\overline{\tau} \in \R^n$: $|\tau(t)| \preceq \overline{\tau},$ for all $t \in \R_0^+$, there are corresponding bounds on the norms of the system parameters 
\cite{ELbounds}. Although disturbance $w(t)$ and system parameters $M(x), V(x,\dot{x}),$ and $G(x)$ are unknown, their boundedness can be utilized effectively for control synthesis. To facilitate analysis, we adopt the following assumptions concerning the parameters of the EL system:

\begin{assumption}\label{assum_d}
    The external disturbance $w(t)$ satisfies $-\overline{w} \preceq w(t) \preceq \overline{w},$ for all $t \in \R_0^+$, where $\overline{w} \in \R^n$ is a known bound.
\end{assumption}

\begin{assumption}\label{assum_tau}
    Given the control bound $\overline{\tau}$, there exists a positive constant $\underline{m} \in \R$, such that $\underline{m}\overline{\tau} \preceq M^{-1}\overline{\tau}$.
\end{assumption}

\begin{assumption}\label{assum_V}
The Coriolis and centrifugal terms $V$ and the gravity vector $G$ satisfy $\underline{V}_M \preceq V_M \preceq \overline{V}_M$, where $V_M := -M^{-1}(V+G)$ and $\underline{V}_M, \overline{V}_M \in \R^n$.
\end{assumption}

\begin{assumption}\label{assum_Md}
    The inverse of the mass matrix scales the disturbance as $-\|M^{-1}\| \overline{w} \preceq M^{-1}w \preceq \|M^{-1}\| \overline{w}$. This implies, there exists $\underline{m}_i \in \R^+$, such that $-\underline{m}_i \overline{w} \preceq M^{-1}w \preceq \underline{m}_i \overline{w}$.
\end{assumption}

The Assumptions \ref{assum_d}-\ref{assum_Md} provide the bounds on various system parameters, which are utilized for establishing the feasibility conditions in Section \ref{sec:feas}.

\subsection{Problem Statement}
The goal is to develop a unified framework that converts expert demonstrations into a time-varying intent envelope, expressed as a spatiotemporal tube (STT), and synthesizes a control law that safely executes the task.
\begin{problem}
Given a set of expert demonstrations over a finite time horizon $\mathcal{T}$, learn an STT $\Gamma(t) \subset \mathbb{R}^n$ and design a  closed-form control input $\tau(t)$ such that the system trajectory remains inside $\Gamma(t)$ for all $t \in \mathcal{T}$, the control input satisfies the actuator bounds $|\tau(t)| \preceq \overline{\tau}$, and the task is executed robustly despite unknown dynamics and external disturbances $w(t)$.
\end{problem}

\textit{Solution Sketch:}
To solve the problem, we propose a unified Learning from Demonstration framework that combines data-driven motion representation with feedback control. From multiple expert demonstrations, we first learn a compact, time-indexed representation of the task that captures both the nominal behavior and variability across demonstrations. This representation is expressed as a \emph{Spatiotemporal Tube (STT)}, which defines an admissible, time-varying envelope within which any system trajectory is considered a valid execution of the demonstrated task. 
We then design a computationally efficient feedback controller that constrains the system trajectory within the learned tube while respecting actuator limits, thereby ensuring safe, robust, and consistent task reproduction without explicit system identification. A schematic overview of the proposed solution is shown in Fig.~\ref{fig:dtw_stt_pipeline}.

\section{Learning Spatiotemporal Constraints}
Here, we present the Spatiotemporal Tubes-based Learning from demonstration (STT-LfD) framework for learning skills from human demonstrations using Gaussian Process \cite{seeger2004gaussian}. 

{STTs have previously been developed in the context of temporal logic specifications \cite{das2025spatiotemporal, das2025real}, where the tubes are designed through optimization or neural networks to satisfy formal task constraints. In contrast, this work uses the STT framework in an LfD setting, where the tubes are learned directly from demonstrations and capture both the nominal task behavior and the variability across demonstrations.}

To achieve this, we use heteroscedastic Gaussian Processes to synthesize time-varying STTs that represent admissible task executions. The learned tubes naturally encode regions requiring high precision as narrow tube segments and more flexible task phases as wider regions.

We begin by formalizing the data collection procedure, with particular attention to the temporal alignment of demonstrated trajectories, before introducing STTs as a Gaussian process–based representation of the learned skill.

\subsection{Data Collection and Temporal Alignment} \label{sec:data}
In LfD, we assume access to multiple expert demonstrations of the same task obtained by recording a human operator guiding the robot through the motion. Each demonstration consists of time-series measurements of the robot state sampled at a fixed rate. 
{The demonstrations are assumed to correspond to dynamically feasible executions of the task under the target platform's actuator limits, so that the learned STTs represent realizable task specifications.}
Furthermore, although repeated executions typically follow the same general path, natural human variability leads to differences in execution speed, resulting in temporal misalignment across demonstrations. Since the proposed framework learns STTs as time-dependent task representations, ensuring temporal consistency among demonstrations is therefore important. 

\begin{figure}[t]
\centering
\includegraphics[width=0.68\textwidth]{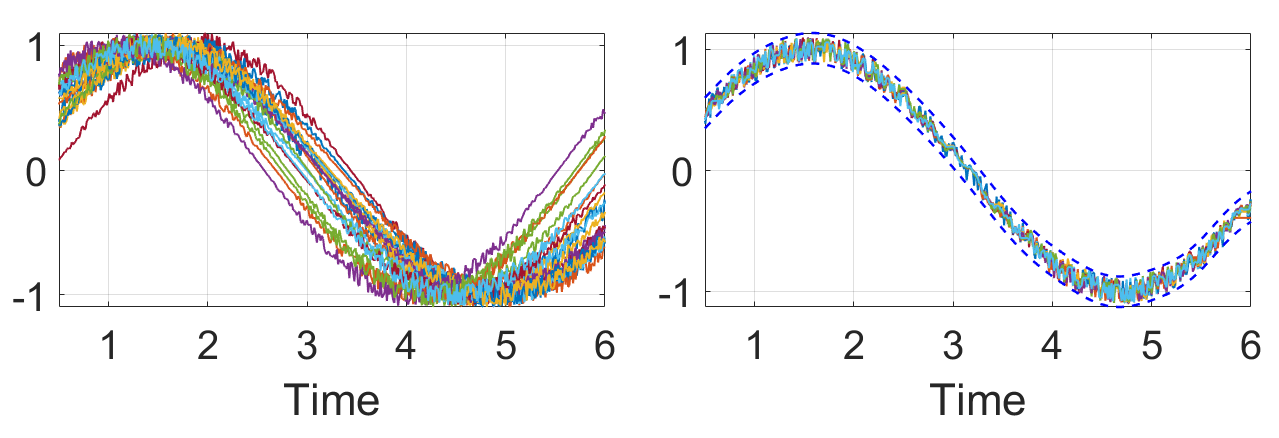}
\caption{Temporal alignment of demonstrations. (a) Raw demonstrations showing natural human timing variability, (b) DTW-aligned with the learned STT, where the time-varying width captures narrow precise and wider flexible segments.}
\label{fig:dtw_stt_pipeline}
\end{figure}

To address this, we use Dynamic Time Warping (DTW) \cite{senin2008dynamic, arduengo2023gaussian} as a preprocessing step. DTW aligns demonstrations by nonlinearly warping their time axes, allowing trajectories that follow similar spatial paths but are executed at different speeds to be expressed on a common time grid, while preserving the overall task geometry.
The resulting temporally aligned dataset is then used to learn the STTs described in the next subsection. An illustrative example of raw demonstrations and their DTW-aligned counterparts is shown in Fig.~\ref{fig:dtw_stt_pipeline}.

\subsection{HGP-based Spatiotemporal Tube Construction}
Given a set of temporally aligned demonstrations (Section \ref{sec:data}), we construct the STTs that compactly represents both the nominal trajectory and the variability observed across demonstrations. For each state dimension $j \in [1;n]$, we model the desired time-varying behavior of the state using heteroscedastic GP (HGP) \cite{le2005heteroscedastic} as
\begin{align}
    x_j(t) \sim \mathcal{GP}(\mu_j(t), \sigma_j^2(t)),
\end{align}
where $\mu_j(t)$ and $\sigma_j(t)$ are the posterior mean and standard deviation inferred from the demonstrations.

Unlike standard GPs, which assume a constant (homoscedastic) noise variance across time, HGPs \cite{heteroscedastic} explicitly allow the uncertainty to vary with time. This is particularly important in the intent envelope that captures task-critical precision, where different phases of a task can show very different levels of variability. For example, precise contact-rich motions often require tight tracking, while free-spaces allow more freedom. By modeling time-dependent uncertainty, HGPs allow us to naturally capture this non-uniform variability and provide more informative, task-aware tube design.

The one-dimensional STT for the $j$-th state is defined as:
\begin{equation}\label{eqn:stt}
    \Gamma_j(t) := \left[\mu_j(t) - \lambda \sigma_j(t), \; \mu_j(t) + \lambda \sigma_j(t)\right]^\top,
\end{equation}
where $\lambda \in \R^+$ is a user-defined scaling parameter for the tube width.
By taking the cartesian product of these intervals across all state dimensions, we obtain the full $n$-dimensional STT
    $\Gamma(t) := \prod_{j=1}^{n} \Gamma_j(t) \subset \mathbb{R}^n.$
This forms an axis-aligned hyperrectangle at each time $t$ that defines the spatiotemporal envelope within which the system state is expected to remain to accurately imitate the demonstrated task.

\begin{remark}[Tube coverage probability]\label{rem:tubecovprob}
Given the tube width parameter $\lambda_j \in \R^+$, the probability that the demonstrations are contained within the STT can be lower bounded as
$\mathbb{P} \left\{ \cap_{j=1}^{n} \mu_j(t) \!-\!\lambda_j\sigma_j(t) \!\leq\! x_j(t) \!\leq \! \mu_j(t) \!+\! \lambda_j\sigma_j(t), \forall t \in \mathcal{T} \right\} \!\geq\! (1 \!-\! \varepsilon)^n,$
where $\varepsilon \in (0,1)$ determines the confidence level. 
$\lambda_j \!:=\! \sqrt{2 \|x_j\|^2_{k_j} \!+\! 300 \gamma_j \log^3\!\left(\tfrac{N+1}{\varepsilon}\right)},$
with $\gamma_j$ denoting the information gain and $\|x_j\|_{k_j}$ a kernel norm measuring function complexity. Computing these quantities exactly is generally NP-hard; therefore, in practice, we estimate tube coverage probabilities using Monte Carlo sampling. For simplicity, we also set $\lambda_j \!=\! \lambda, \forall j \in [1;n]$.
Increasing $\lambda$ increases the probability of capturing all demonstrations but results in wider tubes, reducing tracking precision. Moreover, achieving higher confidence or narrower probability intervals typically requires more demonstrations and increased computation. A more detailed discussion can be found in \cite{srinivas2012information, jagtap2020control}.

Importantly, this probabilistic statement applies only to the learning phase, where the tube is constructed from finite and noisy data. Once learned, the STT represents a fixed underapproximation of the set of geometrically similar demonstrated trajectories and is therefore treated as a deterministic, time-varying constraint set. The control law in Section~\ref{sec:controller} then enforces invariance of this set deterministically, resulting in formal task execution guarantees.
\end{remark}

We now state the theorem that formally establishes that any trajectory remaining inside the tube can be considered a consistent execution of the demonstrated task.

\begin{theorem}
Let $\Gamma(t)$ be the STT constructed from multiple demonstrations of a target task. If a system trajectory $x(t) \in \mathbb{R}^n$ satisfies $x(t) \in \Gamma(t)$ for all $t \in \mathcal{T}$, then the task is completed consistently with the demonstrations, up to bounded deviations governed by the tube width parameter $\lambda$.    
\end{theorem}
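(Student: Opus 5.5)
The plan is to make the informal phrase ``consistent with the demonstrations up to bounded deviations'' precise, and then show it follows directly from the tube definition \eqref{eqn:stt} together with the coverage property in Remark~\ref{rem:tubecovprob}. I will call an execution consistent if, at every $t\in\mathcal{T}$ and in every coordinate $j$, its distance from the nominal demonstrated behaviour $\mu_j(t)$ is at most a multiple of the demonstrated spread $\sigma_j(t)$. Equivalently, its distance from any demonstration contained in the tube is bounded by the tube width. The argument is therefore pointwise in $t$ and coordinatewise in $j$, and then takes the product structure $\Gamma(t)=\prod_j\Gamma_j(t)$ into account.

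First, from $x(t)\in\Gamma(t)$ and the Cartesian product definition, I get $x_j(t)\in\Gamma_j(t)$ for all $j\in[1;n]$. Hence $|x_j(t)-\mu_j(t)|\le\lambda\sigma_j(t)$. In vector form this reads $|x(t)-\mu(t)|\preceq\lambda\sigma(t)$, and so
\[
\|x(t)-\mu(t)\|\le\lambda\|\sigma(t)\| \quad \text{for all } t\in\mathcal{T}.
\]
This is the ``bounded deviation governed by $\lambda$'' statement relative to the learned nominal motion. Because $\sigma_j(t)$ is small in the precise phases, the bound is automatically tight there and loose in the free-motion phases. This is exactly the time-varying intent envelope.

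Second, I relate this bound to the demonstrations themselves. By Remark~\ref{rem:tubecovprob}, with probability at least $(1-\varepsilon)^n$ every aligned demonstration $x^{(i)}$ satisfies $x^{(i)}(t)\in\Gamma(t)$ for all $t$. On that event, the triangle inequality gives $|x_j(t)-x^{(i)}_j(t)|\le 2\lambda\sigma_j(t)$. So the executed trajectory is within twice the tube half-width of every demonstration at every aligned time instant. Since the alignment was done by DTW, ``at every aligned time'' also means the temporal ordering of task phases is preserved. In particular, the initial and final conditions lie in $\Gamma(t_0)$ and $\Gamma(t_f)$, which contain the demonstrated start and goal regions. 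The task is therefore completed in the same sense as the demonstrations.

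The main obstacle is that ``task completed consistently'' is not a mathematical predicate until it is defined. I would handle this by stating the conclusion as the two explicit inequalities above, deterministic relative to $\mu$ and probabilistic relative to the demonstrations. I would also note that any task-success notion that is robust to perturbations of size $2\lambda\max_{t,j}\sigma_j(t)$ around the demonstrations is inherited by $x(t)$.
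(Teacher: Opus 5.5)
Your argument is essentially the paper's: both unpack the product definition of $\Gamma(t)$ to get $|x_j(t)-\mu_j(t)|\le\lambda\sigma_j(t)$, then read $\mu_j\pm\lambda\sigma_j$ as the demonstrated variability, with coverage governed by $\lambda$ via Remark~\ref{rem:tubecovprob}. Your triangle-inequality bound $|x_j(t)-x^{(i)}_j(t)|\le 2\lambda\sigma_j(t)$ sharpens this in a way the paper does not state, but keep it explicitly conditional on the coverage event: the $(1-\varepsilon)^n$ guarantee holds only when $\lambda$ is at least the value given in the remark, and the paper itself calls the learned tube an underapproximation of the demonstrated trajectories.
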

\begin{proof}
Let $x(t) \in \mathbb{R}^n$ be the system trajectory and $\Gamma(t) := \prod_{j=1}^n [\mu_j(t) - \lambda \sigma_j(t), \mu_j(t) + \lambda \sigma_j(t)]$ the STT learned from $N$ aligned demonstrations.
For each dimension $j \in [1;n]$ and all $t \in \mathcal{T}$, the condition $x(t) \in \Gamma(t)$ implies 
$|x_j(t) - \mu_j(t)| \leq \lambda \sigma_j(t).$
Since $\mu_j(t)$ and $\sigma_j(t)$ are estimated from the distribution of the demonstrations using GPR, the interval $[\mu_j(t) \pm \lambda \sigma_j(t)]$ captures the range of variation observed across demonstrations, with confidence level determined by $\lambda$.
Therefore, if $x(t) \in \Gamma(t)$ for all $t \in \mathcal{T}$, then $x(t)$ remains within the demonstrated task envelope at all times, implying that it satisfies the task up to the allowable deviation~$\lambda$.
\end{proof}


The property that any trajectory remaining within the STT is consistent with the demonstrated task, up to bounded deviations, makes STTs a powerful representation for imitation learning. This allows robust generalization while preserving formal task guarantees. Next, we present the control law that constrains the system state to remain within the learned STTs.

\section{Controller Synthesis}\label{sec:controller}

{In Learning from Demonstration, respecting actuator constraints is essential for safe hardware deployment and reliable task reproduction. However, previous STT-based control formulations did not explicitly account for input constraints and could lead to unbounded control actions. Motivated by this limitation, we develop a closed-form control strategy that guarantees forward invariance of the learned STTs while respecting prescribed input constraints.}

The controller is devised in a two-step approach, a velocity-level control \eqref{eqn:sysDyn_vel}, and an acceleration-level control \eqref{eqn:sysDyn_acc}.
\begin{subequations}
  \begin{align}
    \dot{x} &= v, \label{eqn:sysDyn_vel} \\
    \dot{v} &= M(x)^{-1} \left(-V(x, v) - G(x) + u + w \right). \label{eqn:sysDyn_acc}
  \end{align}
\end{subequations}
The proposed two-step architecture is inspired by the funnel-control frameworks presented in \cite{PPCfeedback} and \cite{hard_soft}, where related backstepping-like methodologies were developed for pure-feedback and Euler--Lagrange systems. {However, unlike classical funnel-control approaches, the proposed framework develops a control strategy for enforcing forward invariance of learned STTs under actuator constraints.}

\subsection{Stage I: Velocity-level Control}
Given the STT $\Gamma_j(t)$, to enforce $x_j(t) \in \Gamma_j(t)$, for all $j \in [1;n]$ and $t \in \mathcal{T}$, define the reference velocity vector $v_r(x,t)$
\begin{equation}\label{eqn:velcon}
    v_r(x,t) = -\text{diag}(\Psi(\varepsilon_x))\overline{v},
\end{equation}
where $\varepsilon_x(x,t) := [\varepsilon_{x,1}(x_{1},t), \ldots, \varepsilon_{x,n}(x_{n},t)]^\top$, with $\varepsilon_{x,j}(x_j,t) = \big(x_j(t) - \mu_j(t)\big)/(\lambda\sigma_j(t))$ is the normalized error,
the map $\Psi:\R^n \rightarrow \R^n$ is given in Appendix \ref{sec:clamp}, and $\overline{v} \in \R^n$.


\subsection{Stage II: Acceleration-level Control}
To ensure smooth tracking of the reference velocity $v_{r}(t)$ from \eqref{eqn:velcon} in Stage I, we now bound the velocity error $e_v(t):= v(t) - v_{r}(t)$ within
exponentially decaying funnel constraints $\rho_v: \R_0^+ \rightarrow \R^n$, given by:
$\rho_v(t) = e^{-\mu_v t}(p_v-q_v) + q_v,$ as
\begin{equation}\label{eqn:fun2}
    -\rho_v(t) \prec e_v(t) \prec \rho_v(t).    
\end{equation}
Here $p_v \in \R^n$ represents the initial width of the funnel, satisfying $|e_v(0)| \preceq p_v$, $q_v \in \R^n$ specifies the ultimate bound on the velocity error, with $0^{n \times 1} \prec q_v \prec p_v$, and $\mu_v > \textbf{0}^{n \times n} \in \R^{n \times n}$ is a diagonal matrix, controlling the decay rate of the funnel constraint.
Next, we define the normalized velocity error $\varepsilon_v(t):= \text{diag}(\rho_{v})^{-1}e_v(t).$
The final control input $\tau(t)$ at the acceleration level is then formulated as:
\begin{equation}\label{eqn:con}
    \tau(t) = -\text{diag}(\Psi(\varepsilon_v(t)))\overline{\tau},
\end{equation}
where $\Psi:\R^n \rightarrow \R^n$ is the transformation function in Appendix \ref{sec:clamp}, and $\overline{\tau} \in \R^n$ is the maximum permissible torque.

Now, we introduce the feasibility conditions under which we proceed to guarantee that the system state evoles within the tube $x_j(t) \in \Gamma_j(t)$ for all $t \in \mathcal{T}$ and $j \in [1;n]$.

\subsection{Feasibility Condition}\label{sec:feas}
By constraining the state to remain within the tube bounds, the system achieves the demonstrated task. However, input constraints address practical concerns such as actuator safety. This results in a trade-off between performance and resource limitations. To effectively navigate this trade-off, we establish the following feasibility constraints.
\subsubsection{Stage I}
Given the tube $\Gamma(t) := \prod_{j=1}^n [\mu_j(t) - \lambda \sigma_j(t), \mu_j(t) + \lambda \sigma_j(t)]$ with bounds $|\dot\mu(t) \pm \lambda \dot\sigma(t)| \preceq \overline{\Gamma} = [\overline{\Gamma}_1, \ldots, \overline{\Gamma}_n]^\top$, the maximum permissible velocity $\overline{v}$ should satisfy the following constraint:
\begin{equation}\label{eqn:feas1}
    \overline{v} \succeq \overline{\Gamma} + p_v.
\end{equation}
This condition ensures that the system has sufficient actuation capacity to reproduce the maximum speed of the demonstrations $\overline{\Gamma}$ and reject the worst-case perturbation ($|e_v| \prec p_v$).

\subsubsection{Stage II}
Given funnel constraints $\rho_v(t) = e^{-\mu_vt}(p_v-q_v)+q_v$, and system dynamics in \eqref{eqn:sysdyn} with Assumptions \ref{assum_d}- \ref{assum_Md}, the maximum permissible torque $\overline{\tau}$ should adhere to the following constraint:
\begin{equation}\label{eqn:feas2}
    \overline{\tau} \succeq \frac{1}{\underline{m}} \left( \max(-\underline{V_M}, \overline{V_M}) \!+\! \underline{m}_i\overline{d} \!+\! \mu_v(p_v-q_v) \!+\! \overline{a}_r \right),
\end{equation}
where given the definition of bounded transformation function in Appendix \ref{sec:clamp}, {we select an upper bound $\overline{a}_r \in \R^n$, such that $|\dot{v}_r| = |\overline{v} \ \text{diag}(\Psi(\varepsilon_x))| \preceq \overline{a}_r$. For example, if $\Psi_i(s) := \tanh(5s)$ for all $i \in [1;n]$, then we can choose $\overline{a}_r = 5 \overline{v}$.}

The theorem formally summarizes the approximation-free feedback controller proposed in this paper.

\begin{theorem}\label{thm:bdcontrol}
    Consider the Euler-Lagrange system $\mathcal{S}$ in \eqref{eqn:sysdyn} with assumptions \ref{assum_d}-\ref{assum_Md} and the STTs $\Gamma(t)$ obtained from the demonstrations. If the initial state error $e_x(0)$ satisfies $e_x(0) \in \Gamma(0)$, the initial velocity error follows $|e_{v}(0)| < p_v$, and if feasibility conditions \eqref{eqn:feas1} and \eqref{eqn:feas2} hold, then the closed-form controller $\tau(t)$ in \eqref{eqn:con} guarantees
    \begin{itemize}
        \item[(i)] The system state $x(t)$ evolves within the STTs:
        $x_j(t) \in \Gamma_j(t) \ \forall t \in \R_0^+, \ j \in [1;n]$ and
        \item[(ii)] The control input $\tau(t)$ is bounded within the prescribed limit:
        $|\tau(t)| \preceq \overline{\tau}, \ \forall t \in \R_0^+.$
    \end{itemize}
\end{theorem}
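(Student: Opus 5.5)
Part (ii) is immediate from the structure of \eqref{eqn:con}: the map $\Psi$ in Appendix \ref{sec:clamp} takes values componentwise in $[-1,1]$, so $|\tau_j(t)| = |\Psi_j(\varepsilon_{v,j})|\,\overline{\tau}_j \le \overline{\tau}_j$ for every $t$. The substance is part (i). We prove it by a backstepping-style invariance argument in two layers: first the velocity error stays in its funnel \eqref{eqn:fun2}, then the state stays in $\Gamma(t)$. Both layers are handled by a first-exit-time contradiction.

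\textbf{Step 1 (acceleration level).}
Set $\tx := \sup\{t \ge 0 : |e_v(s)| \prec \rho_v(s)\ \forall s\in[0,t)\}$, which is positive by $|e_v(0)|<p_v$ and continuity. Suppose $\tx<\infty$. Then some component $j$ hits the boundary, say $e_{v,j}(\tx)=\rho_{v,j}(\tx)$; the case $-\rho_{v,j}$ is symmetric.

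Just before $\tx$ we have $\varepsilon_{v,j}\uparrow 1$, so the design of $\Psi$ gives $\Psi_j(\varepsilon_{v,j})\to 1$ and the input saturates at $\tau_j=-\overline{\tau}_j$. Using \eqref{eqn:sysDyn_acc}, we write
\begin{equation*}
\dot e_{v} = V_M + M^{-1}\tau + M^{-1}w - \dot v_r .
\end{equation*}
We then bound each term. Assumption~\ref{assum_V} gives $V_{M,j}\le \max(-\underline{V}_M,\overline{V}_M)_j$. Assumption~\ref{assum_Md} gives $(M^{-1}w)_j\le \underline{m}_i\overline{w}_j$. The choice of $\overline{a}_r$ gives $|\dot v_{r,j}|\le \overline{a}_{r,j}$. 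For the input term, Assumption~\ref{assum_tau} gives $(M^{-1}\tau)_j\le -\underline{m}\,\overline{\tau}_j$.

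This last bound is the main obstacle. Assumption~\ref{assum_tau} controls $M^{-1}\overline{\tau}$, whereas near the boundary the actual input is $\tau$ with only the $j$-th component saturated. We would handle it by arguing that $\tau$ approaches $-\overline{\tau}$ componentwise on the active face, or else by restricting to the diagonally dominant reading of the assumption; this step must be stated carefully.

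Combining these bounds with \eqref{eqn:feas2} gives $\dot e_{v,j}(\tx^-) \le -\mu_{v,j}(p_{v,j}-q_{v,j})$. This is at most $\dot\rho_{v,j}(\tx) = -\mu_{v,j}(p_{v,j}-q_{v,j})e^{-\mu_{v,j}\tx}$, with strict inequality for $\tx>0$. Hence $e_{v,j}-\rho_{v,j}$ is strictly decreasing near $\tx$, contradicting the fact that it increases to $0$ at $\tx$. Therefore $|e_v(t)|\prec\rho_v(t)\preceq p_v$ for all $t$.

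\textbf{Step 2 (velocity level).}
Now $v=v_r+e_v$ with $|e_v|\prec p_v$. We repeat the exit argument for $\varepsilon_{x,j}$. Suppose $x_j$ first reaches the upper tube boundary $\mu_j+\lambda\sigma_j$ at some $t_1$. Then $\Psi_j(\varepsilon_{x,j})\to 1$, so $v_{r,j}\to-\overline{v}_j$. Hence
\begin{equation*}
\dot x_j - (\dot\mu_j+\lambda\dot\sigma_j) \le -\overline{v}_j + p_{v,j} + \overline{\Gamma}_j \le 0 ,
\end{equation*}
where the last inequality is \eqref{eqn:feas1}. Strictness, coming from the strict funnel bound $|e_{v,j}|<p_{v,j}$, yields the contradiction. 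The lower boundary is handled symmetrically. Since $x(0)\in\Gamma(0)$, we conclude $x_j(t)\in\Gamma_j(t)$ for all $t$ and all $j$. This requires $\sigma_j>0$ so that $\varepsilon_x$ is well defined, which holds for the HGP posterior.

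If $\Psi$ only saturates asymptotically (e.g.\ $\tanh$), Steps 1 and 2 must be run with $\Psi$ treated as approaching its bound near $|\varepsilon|=1$, which is how we read the clamp in Appendix~\ref{sec:clamp}.
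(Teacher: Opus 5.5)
Your proof follows the paper's. Both use two nested first-exit-time contradictions: on the boundary, $\Psi$ reaches $\pm1$ and saturates $v_{r,j}$ or $\tau_j$, and the feasibility conditions \eqref{eqn:feas1}--\eqref{eqn:feas2} make that boundary repelling. The paper presents the tube layer first (assuming $|e_v|\prec p_v$) and the funnel layer second. Your ordering, your one-line proof of (ii), and your use of the slack $\mu_{v,j}(p_{v,j}-q_{v,j})(1-e^{-\mu_{v,j}\tx})>0$ for strictness are cleaner than the paper's limit manipulations.

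The step you flag is exactly where the paper's Stage II writes $\lim \dot v_i\le \overline{V}_{M,i}-\underline{m}\,\overline{\tau}_i+\underline{m}_i\overline{d}_i$ from $\tau_i=-\overline{\tau}_i$ alone. In doing so it silently uses a componentwise reading of Assumption~\ref{assum_tau}: $(M^{-1}\tau)_j\le-\underline{m}\,\overline{\tau}_j$ whenever $\tau_j=-\overline{\tau}_j$ and $|\tau|\preceq\overline{\tau}$. Your worry is justified. Take $\overline{\tau}=(1,1)^\top$ and $M^{-1}=\begin{pmatrix}1&1/2\\1/2&1\end{pmatrix}$. The literal assumption $\underline{m}\,\overline{\tau}\preceq M^{-1}\overline{\tau}$ holds with $\underline{m}=3/2$, yet the admissible input $\tau=(-1,1)^\top$ gives $(M^{-1}\tau)_1=-1/2$.

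Your first proposed fix fails. On the active face only the $j$-th input saturates, while each $\tau_k$, $k\neq j$, is set by $\varepsilon_{v,k}\in(-1,1)$ and can have either sign. Drop it, and state the second option explicitly as the hypothesis you use, e.g. $(M^{-1})_{jj}\overline{\tau}_j-\sum_{k\neq j}|(M^{-1})_{jk}|\,\overline{\tau}_k\ge\underline{m}\,\overline{\tau}_j$. This is the reading the paper's proof implicitly relies on. With it, your argument is as complete as the paper's; both also simply take $|\dot v_r|\preceq\overline{a}_r$ as given.

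Your closing remark on $\tanh$ is not right as stated. $\tanh(5s)$ is not in the Appendix class, since $\tanh 5<1$. At $|\varepsilon|=1$ the input therefore reaches only $\tanh 5$ times its bound, and the feasibility margins would have to be scaled by that factor. For the theorem as stated, just use $\Psi(\pm1)=\pm1$.
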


\begin{figure*}[t]
    \centering
    \includegraphics[width=\textwidth, height=0.25\linewidth]{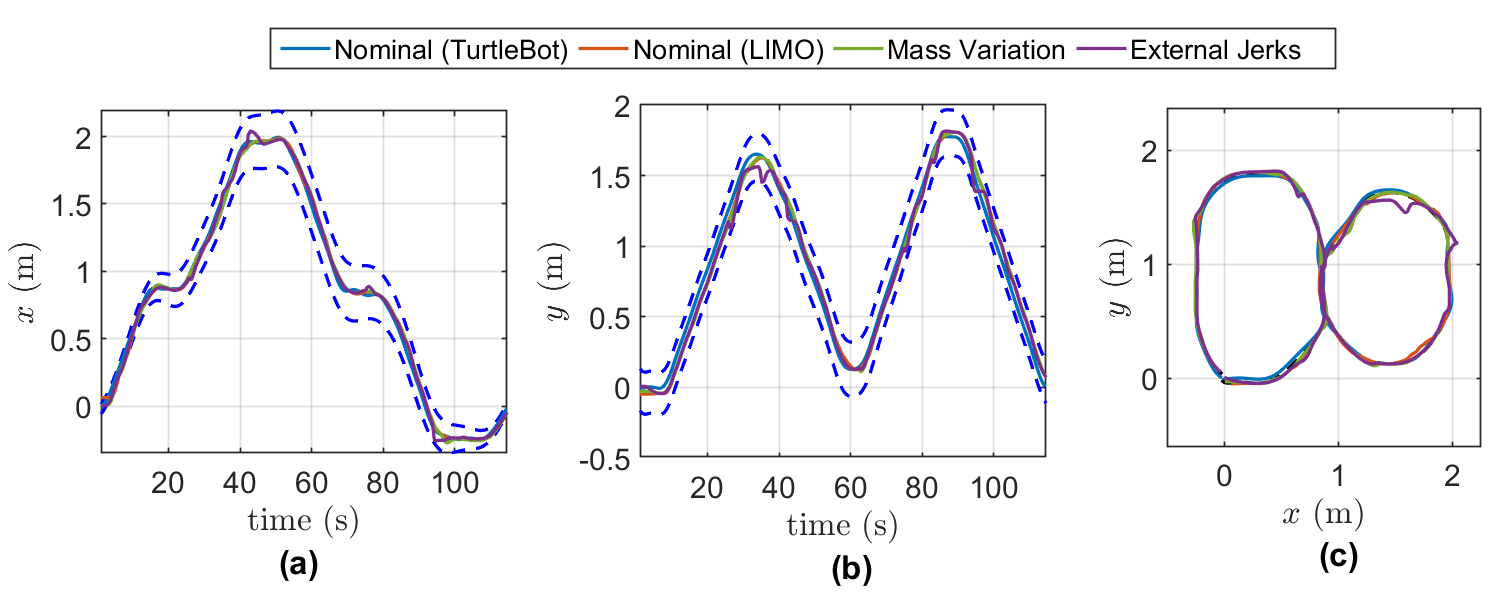}
    \caption{{Mobile robot experiments under varying platforms and disturbances. 
    (a)-(b) The synthesized STTs for x and y dimensions. 
    (c) Resulting figure-eight trajectories for four experimental conditions: nominal TurtleBot execution, AgileX LIMO execution, mass variation, and external jerks.} }
    \label{fig:omni}
\end{figure*}

\begin{figure}[htbp]
    \centering
    \includegraphics[width=0.6\linewidth]{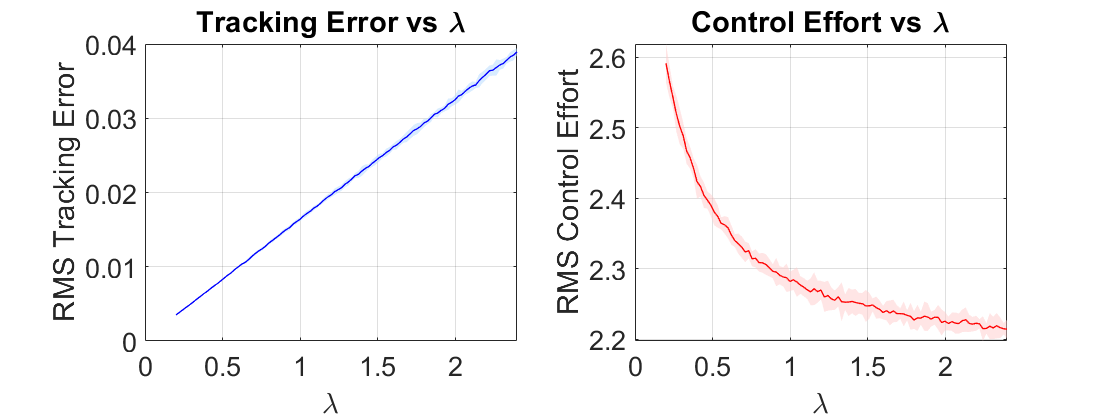}
    \caption{{Sensitivity study of the STT width parameter $\lambda$ for the mobile robot figure-eight trajectory task. 
    (left) RMS tracking error and
    (right) RMS control effort as a function of $\lambda$. The solid curves denote the mean over 10 trials with bounded random disturbances, while the shaded regions represent ±2 standard deviations.}}
    \label{fig:lambda_sensitivity}
\end{figure}

\begin{proof}
    The proof is divided into two stages:

    \textbf{Stage I.} In stage I, we prove that the reference velocity vector $v_r(t)$ in \eqref{eqn:velcon} enforces the state $x(t)$ to remain in the STT $\Gamma(t)$ for all time $t \in \mathcal{T}$. 

    We will prove this via contradiction. Let $\tx$ be the first time instance when the state $x(t)$, on the application of velocity input $v_r(t)$ \eqref{eqn:velcon}, exits the STT $\Gamma(t)$, i.e., $\exists j \in [1;n],$
    $$x_j(\tx) \leq \mu_{j}(\tx)-\lambda \sigma_{j}(\tx) \text{ or } x_j(\tx) \geq \mu_{j}(\tx)+\lambda \sigma_{j}(\tx).$$
    Then, for all $(t,j) \in [0, \tx) \times [1;n],$
\begin{gather}\label{Eq:inqe_tx}
    \mu_{j}(t)-\lambda \sigma_{j}(t) < x_j(t) < \mu_{j}(t)+\lambda\sigma_{j}(t).
\end{gather}
We will consider the following two cases for $t \in [0,\tx)$.

\textbf{Case I.} There exists $j \in [1;n]$ such that $x_j(t)$ approaches the upper tube constraint, i.e., $x_j(t) \rightarrow \mu_{j}(t)+\lambda\sigma_{j}(t) \implies \underline{\delta}_j(t) := x_j(t) - \mu_{j}(t)-\lambda\sigma_{j}(t) \rightarrow 0$. Following \eqref{Eq:inqe_tx}, we have
\begin{align*}
    &x_j(t) \!<\! \mu_{j}(t)\!+\!\lambda\sigma_{j}(t) \Rightarrow \underline{\delta}_j(t) \uparrow 0 
    \Rightarrow \lim_{\underline{\delta}_j(t) \uparrow 0} \frac{d}{dt} \underline{\delta}_j(t) \!>\! 0
    \Rightarrow \lim_{\underline{\delta}_j(t) \uparrow 0} \dot{x}_{j}(t) \!>\! \lim_{\underline{\delta}_j(t) \uparrow 0} \dot{\mu}_{j}(t)\!+\!\lambda\dot{\sigma}_{j}(t) 
     \!>\! -\overline{\Gamma}_j.
\end{align*}

Now, the actual velocity in \eqref{eqn:sysDyn_vel} is given by $\dot{x}_j(t) = v_{r,j}(t) + e_{v,j}(t)$, where $v_{r,j}(t)$ is the reference velocity law in \eqref{eqn:velcon}, and $e_{v,j}(t) < p_{v,j}$ (we ensure this in Stage II). Therefore, as $(x_j(t) - \mu_{j}(t)-\lambda\sigma_{j}(t)) \rightarrow 0$, $v_{r,j}(t) \rightarrow -\overline{v}_{j}$, and we obtain that there exists $j \in [1;n]$, such that:
\begin{align*}
    &\lim_{\underline{\delta}_j(t) \uparrow 0} \dot{x}_{j}(t) = -\overline{v}_{j} + e_{v,j}(t) > -\overline{\Gamma}_j 
    \implies -\overline{v}_{j} + p_v > -\overline{\Gamma}_j 
    \implies \overline{v}_{j} < \overline{\Gamma}_j + p_v.
\end{align*}
which contradicts \eqref{eqn:feas1}.

\textbf{Case II.} There exists $j \in [1;n]$ such that $x_j(t)$ approaches the lower tube constraint, i.e., 
$x_j(t) \rightarrow \mu_{j}(t)-\lambda\sigma_{j}(t)$. 
Using the lower-bound feasibility condition leads to the same contradiction and the proof follows analogously. Therefore the lower tube boundary cannot be crossed.



Therefore, the reference velocity vector $v_r$ in~\eqref{eqn:velcon} constrains $x(t)$ within the tubes for all time, i.e., $\mu_{j}(t)-\lambda\sigma_{j}(t) < x_j(t) < \mu_{j}(t)+\lambda\sigma_{j}(t), \forall t \geq 0$.


\textbf{Stage II.} In stage II, we prove that the control law $\tau(t)$ in \eqref{eqn:con} keeps the velocity tracking error $e_v(t)$ within the funnel $[-\rho_v(t), \rho_v(t)]$ for all $t \in \R_0^+$ \eqref{eqn:fun2}.

We will prove this via contradiction. Let $\tx$ be the first time instance when the velocity error $e_v(t)$, on the application of input $\tau(t)$ \eqref{eqn:con}, violates \eqref{eqn:fun2}, i.e., $\exists j \in [1;n]$,
    $$e_{v,j}(\tx) \leq -\rho_{v,j}(\tx) \text{ or } e_{v,j}(\tx) \geq \rho_{v,j}(\tx).$$
Then, for all $(t,j) \in [0, \tx) \times [1;n],$
\begin{gather}\label{Eq:inqe_tv}
    -\rho_{v,j}(t) < e_{v,j}(t) < \rho_{v,j}(t).
\end{gather}
We will consider the following two cases for $t \in [0,\tx)$.

\textbf{Case I.} For some $j \in [1;n]$, $e_{v,j}(t)$ approaches the upper funnel constraint, i.e., $e_{v,j}(t) \rightarrow \rho_{v,j}(t) \implies e_{v,j}(t) - \rho_{v,j}(t) =: \overline{\delta}_{v,j} \rightarrow 0$. Following \eqref{Eq:inqe_tv}, we have:
\begin{align*}
    &e_{v,j}(t) < \rho_{v,j}(t) \implies \overline{\delta}_{v,j} \uparrow 0 \implies \lim_{\overline{\delta}_{v,j} \uparrow 0} \frac{d}{dt} \overline{\delta}_{v,j} > 0 \\
    \implies &\lim_{\overline{\delta}_{v,j} \uparrow 0} \dot{e}_{v,j}(t) > \lim_{\overline{\delta}_{v,j} \uparrow 0} \dot{\rho}_{v,j}(t) > -\mu_{v,j}(p_{v,j}-q_{v,j}) \\
    \implies &\lim_{\overline{\delta}_{v,j} \uparrow 0} \dot{v}_{i}(t) > -\mu_{v,j}(p_{v,j}-q_{v,j}) + \dot{v}_r(t).
\end{align*}
Therefore, there exists $i \in [1;n]$, such that 
\begin{gather}\label{eqn:dv_b1}
    \lim_{\overline{\delta}_{v,j} \uparrow 0} \dot{v}_{i}(t) > -\mu_{v,j}(p_{v,j}-q_{v,j}) - \overline{a}_{r,i}.
\end{gather}
Since, $\lim_{\overline{\delta}_{v,j} \uparrow 0} \varepsilon_{v,j}(t) = 1$, we obtain $\lim_{\overline{\delta}_{v,j} \uparrow 0} \tau_{i}(t) = -\overline{\tau}_{i}$.
Using the dynamics~\eqref{eqn:sysDyn_acc} and feasibility condition \eqref{eqn:feas2}
\begin{align*}
    \lim_{\overline{\delta}_{v,j} \uparrow 0} \!\!\dot{v}_{i}(t) \!\leq\! \overline{V}_{M,i} \!-\! \underline{m} \overline{\tau}_i \!+\! \underline{m}_i\overline{d}_i
    \!\leq\! \!-\mu_{v,j}(p_{v,j} \!-\! q_{v,j}) \!-\! \overline{a}_{r,i}
\end{align*}
which contradicts \eqref{eqn:dv_b1}. 

\textbf{Case II.} For some $i \in [1;n]$, $e_{v,j}(t)$ approaches the lower funnel constraint, i.e., 
$e_{v,j}(t) \rightarrow -\rho_{v,j}(t)$.
Using the lower-bound feasibility condition leads to the same contradiction and the proof follows analogously. Therefore the lower funnel boundary cannot be crossed.
Therefore, the controller $\tau$ in~\eqref{eqn:con} constrains the velocity error within the funnel, i.e.,
$-\rho_v(t)\prec e_v(t)\prec \rho_v(t), \forall t\ge0.$

Combining Stages~I and~II, the control input $\tau(t)$ in \eqref{eqn:con}, under feasibility conditions \eqref{eqn:feas1} and \eqref{eqn:feas2}, ensures that the system state $x(t)$ evolves within the STTs $\Gamma(t)$ in \eqref{eqn:stt}.
\end{proof}

\begin{remark}
In Stage~I of the proof, we do not assume perfect tracking of the virtual velocity. Instead, we consider the actual closed-loop dynamics $\dot{x}(t) = v_r(t) + e_v(t),$ where $e_v(t) = v(t) - v_r(t)$ is the velocity tracking error. The analysis in Stage~I shows that the $x(t) \in \Gamma(t)$ as long as $|e_v(t)| < \rho_v(t)$ and the feasibility condition~\eqref{eqn:feas1} holds. Stage~II then completes the proof by showing that the control input $\tau(t)$ in~\eqref{eqn:con} enforces the bound $|e_v(t)| < \rho_v(t)$ for all time, which in turn guarantees that the state $x(t)$ evolves within the tube $\Gamma(t)$.
\end{remark}

\begin{remark}
{Assumptions~\ref{assum_d}-\ref{assum_Md} and the associated feasibility conditions~\eqref{eqn:feas1}-\eqref{eqn:feas2} are introduced only to establish theoretical guarantees on forward invariance of the learned STTs and bounded control execution. By “unknown dynamics” or “model-free execution,” we specifically mean that the proposed framework does not require explicit identification of the system parameters, such as the inertia matrix, Coriolis terms, or gravitational dynamics, for either STT generation or controller implementation. The learned STTs are constructed directly from demonstrations, and the control law~\eqref{eqn:con} can be implemented without explicit parametric system identification.
}
\end{remark}

\begin{figure*}
    \centering
     \includegraphics[width=\textwidth]{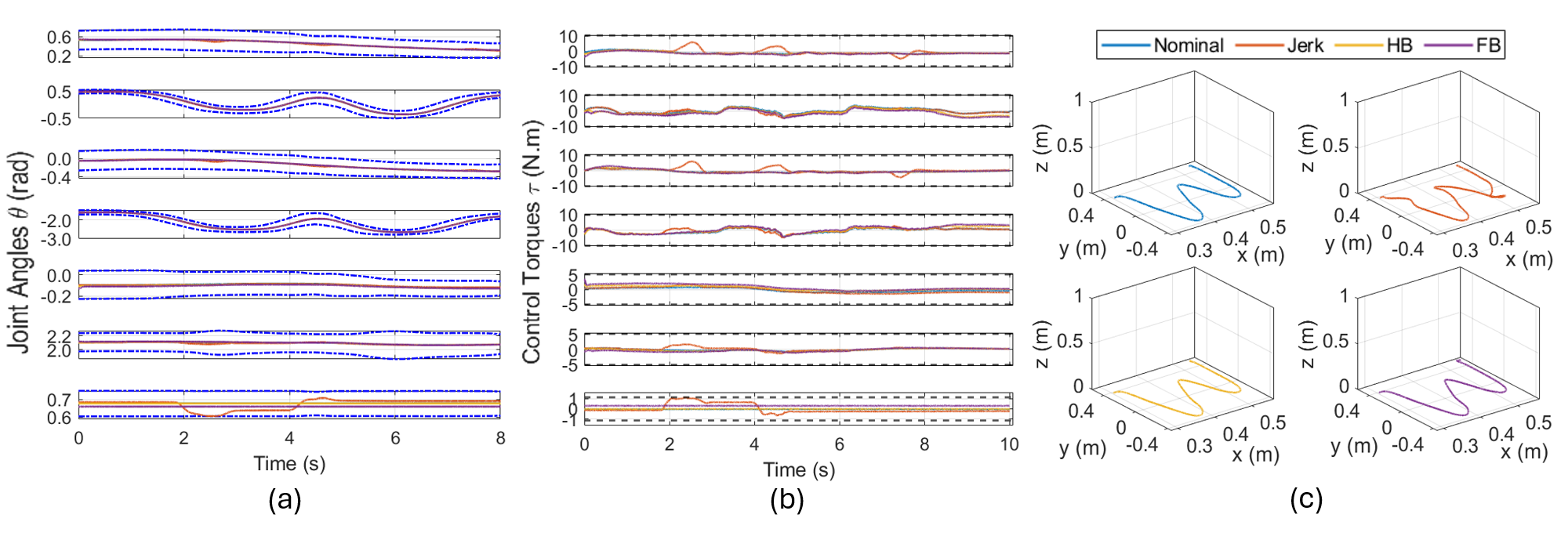}
    \caption{7-DOF manipulator under nominal conditions, jerk disturbances, and mass variations:
    (a) Joint-space trajectories remain within the learned STT.
    (b) Corresponding control torques generated by STT-LfD stay within the prescribed bounds while adapting to disturbances.
    (c) End-effector execution of the demonstrated task, showing consistent tracking.}    
    \label{fig:frankaW}
\end{figure*}


\begin{figure}[t]
    \centering
    \includegraphics[width=0.65\textwidth]{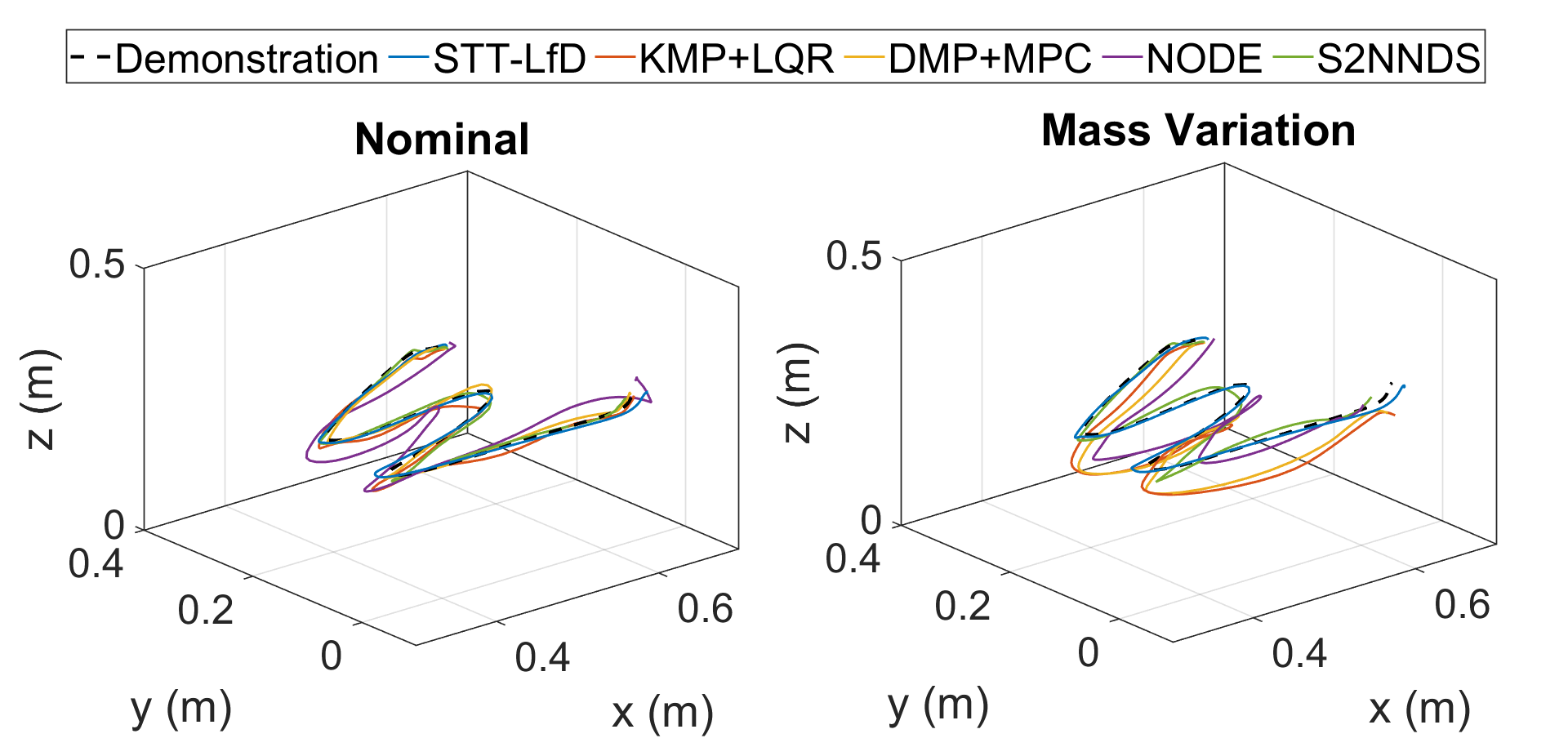}
    \caption{Comparison of End Effector Trajectory of 7-DOF Manipulator with other baselines, showing robustness to mass variation. Left: Nominal. Right: with added mass.}
    \label{fig:compare_EE}
\end{figure}

\section{Experimental Validation}\label{sec:validation}
We demonstrate the proposed STT-LfD framework through hardware experiments on two platforms: an omnidirectional mobile robot and a 7-DOF Franka Research~3 manipulator. The experiments focus on two aspects: handling spatiotemporal ambiguity in complex tasks and maintaining robustness against unknown dynamics and disturbances.

\subsection{Task Design:}
The tasks were chosen to highlight spatiotemporal aspects that are difficult for conventional methods. \\
\textbf{Mobile Base (Figure ``8" Tracking):}
An omnidirectional mobile robot follows a self-intersecting figure ``8" trajectory (Fig.~\ref{fig:omni}). At the crossing point, purely spatial methods cannot distinguish between the two distinct phases of the motion.The STT resolves this ambiguity through temporal indexing, allowing the robot to maintain the correct task sequence. \\
\textbf{7-DOF Manipulator (Alphabet ``W" Drawing):}
The manipulator performs a kinesthetic demonstration of drawing the letter ``W" (Fig.~\ref{fig:frankaW}). The task includes both precise corner points and flexible connecting strokes. Unlike decoupled trackers, the HGP-based STT naturally extracts these varying precision requirements from human variability, providing a time-varying constraint set for the controller.

\subsection{Implementation Details:}
The framework was implemented in C++ using the Libfranka API for the manipulator and a ROS2 interface for the mobile robot. All experiments were conducted on a system with an Intel Core i7 CPU at 3.6~GHz and 16~GB RAM.

We collected 15 expert demonstrations with variations in speed and execution style while maintaining the same overall path. The STT scaling parameter was set to $\lambda = 1.2$, representing an intent envelope that captures approximately $77\%$ of the expert demonstration variance. This choice provides a balanced trade-off between accurate trajectory tracking and robustness to sensor noise. A smooth saturation function $\Psi(s) = \tanh(5s)$ was used to limit control effort near the tube boundaries. 
For the mobile robot, the velocity and acceleration limits were set to $0.1~m/s$ and $0.1~m/s^2$, while the manipulator joint velocity and torque limits were selected according to \cite{FRANKA}.

\subsection{Execution Behavior and Tube Invariance:}
The STTs clearly capture the intent of the experts, reflecting how demonstrations vary across different parts of the task.

In the figure ``8'' experiment, the demonstrations are highly consistent near the initial and final regions, which results in narrower tubes towards the start and end. As the robot moves through the larger regions of the track, the demonstrations show greater variability in speed and spatial execution, leading to wider tube sections that allow more flexibility. The temporal indexing provided by the STT ensures that the robot passes through the intersection with the correct motion direction, as illustrated by the task-space trajectory. Throughout the experiment, the executed trajectories remain inside the learned tubes and accurately trace the figure ``8'' trajectory while the control inputs remain bounded.
{To further evaluate robustness to variations in system dynamics and external disturbances, we additionally reproduced the same learned STTs using an AgileX LIMO robot under same input constraints without modifying the controller structure or requiring explicit system identification. We also evaluated the framework under a 1\,kg load variation and manually applied external jerky dynamic disturbances during execution. Across all tested conditions, the proposed controller successfully maintained the system trajectory within the learned STTs while completing the task, as shown in Fig.~\ref{fig:omni}.
}

A similar pattern is observed in the ``W'' drawing task. Since all demonstrations begin from nearly the same configuration, the STTs are initially narrow. The tubes widen along the straight strokes where human motion varies more and tighten again near sharp corners that require precise execution. To evaluate robustness under changing dynamics, we additionally performed experiments with a varying mass matrix and manually applied external jerks during execution. Despite these disturbances and dynamic variations, the proposed controller successfully modulated the torques to maintain the manipulator state within the learned STT, as illustrated in Fig.~\ref{fig:frankaW}.

{The proposed framework also demonstrated strong computational scalability across systems with substantially different dimensionality. Once the demonstration data is collected, the STTs for each state dimension were computed in parallel, enabling efficient offline tube synthesis. In our experiments, the offline STT generation required approximately $5.655$ s for the mobile robot ($x \in \mathbb{R}^2$) and $5.794$ s for the Franka manipulator ($x \in \mathbb{R}^7$). During online execution, the proposed closed-form controller required only $0.018$ ms per control step for the mobile robot and $0.021$ ms per control step for the manipulator. The comparable offline and online computation times across systems with substantially different dimensionality highlight the computational efficiency and scalability of the proposed framework.}

Across both platforms, these results demonstrate that the STT-LfD framework effectively translates the probabilistic specifications learned from demonstrations into robust and computationally efficient hardware execution without requiring explicit system identification or precise knowledge of the underlying model parameters.

\begin{table*}[t]
\centering
\caption{Quantitative comparison of STT-LfD and baseline controllers on the Franka Research~3 manipulator.}
\label{tab:baseline_comparison}
\setlength{\tabcolsep}{5pt}
\renewcommand{\arraystretch}{1.15}
\begin{tabular}{lccccc}
\toprule
\textbf{Method} &
\multicolumn{2}{c}{\textbf{Tracking Error (mm)}} &
\multicolumn{2}{c}{\textbf{Max Torque (N$\cdot$m)}} &
\textbf{Comp. Time (ms)} \\
\cmidrule(lr){2-3} \cmidrule(lr){4-5}
& \textbf{Nominal} & \textbf{Mass Variation} &
\textbf{Nominal} & \textbf{Mass Variation} & \\
\midrule
STT-LfD (ours) &
$\mathbf{2.83 \pm 0.09}$ &
$\mathbf{4.24 \pm 0.08}$ &
$\mathbf{1.93 \pm 0.001}$ &
$\mathbf{2.74 \pm 0.002}$ &
$\mathbf{0.021 \pm 0.007}$ \\

KMP+LQR \cite{silverio2019uncertainty} &
$10.08 \pm 0.02$ &
$53.85 \pm 0.47$ &
$8.43 \pm 2.36$ &
$9.25 \pm 1.97$ &
$15.941 \pm 2.83$ \\

DMP+MPC \cite{hu2019mobile} &
$10.64 \pm 0.05$ &
$55.58 \pm 1.01$ &
$7.81 \pm 2.74$ &
$7.51 \pm 3.13$ &
$22.733 \pm 1.31$ \\

{NODE-CLF-CBF} \cite{nawaz2024learning} &
$13.44 \pm 0.108$ &
$18.78 \pm 0.02$ &
$1.37 \pm 0.019$ &
$2.52\pm 0.12$ &
$1.172 \pm 0.35$ \\

{S2NNDS} \cite{binny2026safe} &
$17.19 \pm 0.011$ &
$27.39 \pm 0.09$ &
$1.853 \pm 1.076$ &
$3.206\pm 1.54$ &
$1.486 \pm 0.14$ \\
\bottomrule
\end{tabular}
\end{table*}

\subsection{Sensitivity Analysis of Tube Width Parameter:}
The tube width parameter $\lambda$ determines the width of the learned STTs, thereby controlling the trade-off between tracking precision and control effort. To analyze its effect, we varied $\lambda$ from $0.2$ to $2.4$ for the mobile robot figure-eight trajectory task. {For each value of $\lambda$, the experiment was repeated over $10$ trials under bounded random disturbances, and the reported results correspond to the mean $\pm$ one standard deviation.}

The quantitative results are shown in Fig.~\ref{fig:lambda_sensitivity}. As $\lambda$ increases, the RMS tracking error increases approximately linearly due to the wider admissible execution region, whereas the RMS control effort decreases approximately exponentially since the controller requires less aggressive corrective action near the tube boundaries. Based on this trade-off, $\lambda=1.2$ was empirically selected as a balanced operating point between tracking precision and control effort.


\subsection{Comparative Performance Analysis and Discussion}
To benchmark STT-LfD against both existing approaches, we conducted 10 trials for each of these methods: KMP+LQR \cite{silverio2019uncertainty}, DMP+MPC \cite{hu2019mobile}, NODE-CLF-CBF \cite{nawaz2024learning} and S2NNDS \cite{binny2026safe}. 
{For a fair comparison, all methods used the same expert demonstrations and were evaluated under identical task specifications, control frequency, input limits, and mass variation, while each baseline was tuned following its original implementation.}
The results are summarized in Table~\ref{tab:baseline_comparison}.

\subsubsection{Robustness to Model Perturbations} 
The most significant disparity between the methods is observed under dynamic uncertainty. When a 1 kg load was added to the manipulator, the tracking error of the model-based baselines (KMP+LQR and DMP+MPC) increased substantially, exceeding $50$~mm. 
{While NODE-CLF-CBF and S2NNDS showed better robustness, their tracking errors still increased to $18.78$~mm and $27.39$~mm, respectively.}
In contrast, STT-LfD maintained a tracking error of only $4.24\pm0.08$~mm under the same load variation.
This robust performance is a result of the controller's ability to compensate for unknown EL terms in real time.

\subsubsection{Impact of Spatiotemporal Precision}
During the precise drawing segments, the HGP-based synthesis generated a narrow STT due to low expert variability. STT-LfD successfully enforced these tighter constraints by increasing control effort proportionally to the normalized error. Conversely, the baselines do not explicitly encode demonstration variability as time-varying admissible execution regions.

\subsubsection{Computational Efficiency for Real-Time Execution}
The proposed controller required only $0.021$~ms per control step, compared to $15.94$~ms, $22.73$~ms, {$1.17$~ms, and $1.49$~ms} for KMP+LQR, DMP+MPC, {NODE-CLF-CBF, and S2NNDS}, respectively. This improvement stems from the closed-form nature of the controller, which avoids online optimization and iterative neural-network evaluations during execution.

\textbf{Overall Observation:}
By combining learning and control into a single framework, STT-LfD avoids the modeling mismatch inherent in decoupled designs. The learned STT acts as a time-varying safety specification, and the closed-form controller ensures that any trajectory executed by the robot is not only kinematically feasible but also remains within the expert's original intent envelope. 
All experimentation videos are available at \href{https://tinyurl.com/STT-LfD}{https://tinyurl.com/STT-LfD}.

\begin{remark}
    {For quantitative comparison, all methods were evaluated in the same Gazebo simulation environment under identical task and disturbance conditions. In addition, STT-LfD was validated on real hardware to demonstrate practical deployment. Hardware experiments were not conducted for the baseline methods, as they do not provide formal forward-invariance guarantees and exhibited significant performance degradation under model perturbations.}
\end{remark}

\section{Conclusion}\label{sec:conclusion}
This letter introduces STT-LfD, a unified framework that connects probabilistic learning with robust control. We transform demonstrations into STTs through HGP, and capture the expert’s intent. We then propose a input-constrained closed-form controller that does not require explicit system identification. Hardware results demonstrate that STT-LfD allows accurate, safe, and computationally efficient task reproduction, even for systems operating under dynamic perturbations and uncertain system parameters. 
{While the current framework focuses on learning and reproducing demonstrated behaviors in static environments, in future work, we plan to extend the framework to dynamically changing environments through online adaptation of the learned STTs for real-time obstacle avoidance and multi-robot coordination. We also plan to develop systematic methods for selecting the tube width parameter $\lambda$.}
{Another direction is to automatically adapt the temporal parameterization of the learned STTs, enabling demonstrations collected at arbitrary execution speeds to be transformed into dynamically feasible task specifications that satisfy the actuator constraints of the target platform.}

\bibliographystyle{ieeetr} 
\bibliography{sources} 

\appendix
\section{Bounded Transformation Functions}\label{sec:clamp}

The bounded transformation function $\Psi:\R^n \rightarrow \R^n$ is a smooth mapping that ensures the control inputs remain within specified limits while preserving desired behavior. We introduce and discuss the use cases of two categories of such functions: (i) saturation and (ii) zeroing.

\subsection{Saturation Transformation Function}
Saturation-type functions are defined as:
$\Psi(s) = [\Psi_1(s_1), \ldots, \Psi_n(s_n)]^\top$, where for all $i = [1;n]$:
$$\Psi_i(s_i) =  
    \begin{cases} 
      -1, & s_i \in (-\infty,-1], \\
      0, & s_i = 0, \\
      1, & s_i \in [1,\infty),
    \end{cases}
$$
and $\Psi_i(s_i)$ is non-decreasing for all $s_i \in (-\infty,\infty)$.

These functions saturate at $\pm1$ when their input exceeds $\pm1$. This saturation ensures robustness: even if the error grows beyond bounds due to disturbances, the system still receives control input that pushes it back toward the safe region.

\subsection{Zeroing Transformation Function}
Zeroing functions are also defined component-wise: 
$\Psi(s) = [\Psi_1(s_1), \ldots, \Psi_n(s_n)]^\top$, where for all $i = [1;n]$:
$$\Psi_i(s_i) = 
\begin{cases} 
  -1, & s_i = -1, \\
  0, & s_i = 0, \\
  1, & s_i = 1,
\end{cases}
$$
$$\lim_{s_i \rightarrow \pm\infty} \Psi_i(s_i) = 0 \text{ and $\Psi_i(s_i)$ is non-decreasing on $(-1,1)$}.$$

Unlike saturation functions, zeroing functions decay to zero outside $[-1,1]$. This behavior is useful in safety-critical settings where large errors may indicate danger. Reducing the control input to zero in such cases can prevent unsafe or unstable responses.

\section{Additional Manipulator Experiments}\label{sec:roboface}

\begin{figure}[t]
    \centering
    \includegraphics[width=0.9\textwidth]{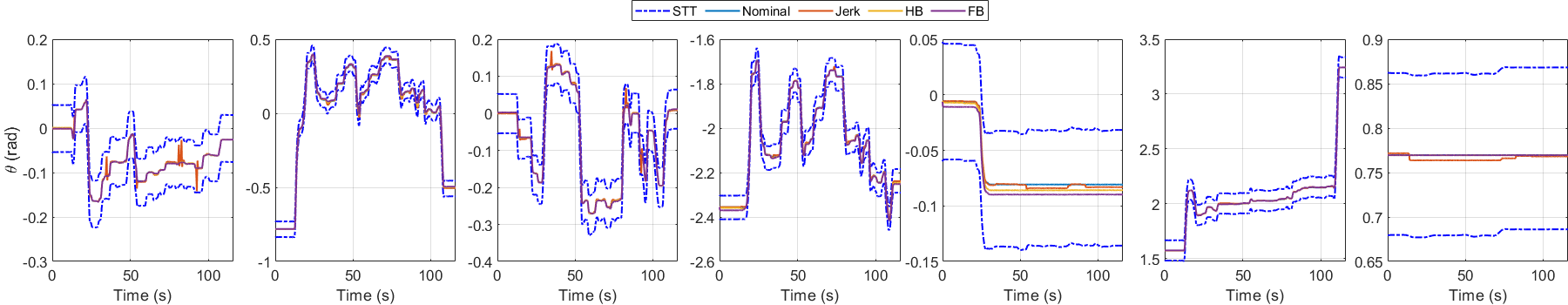}
    \caption{Demonstrated Data and Learned Spatiotoemporal Tubes for the 7-DOF Manipulator.}
    \label{fig:tubefranka}
\end{figure}

\begin{figure}[t]
    \centering
    \includegraphics[width=0.9\textwidth]{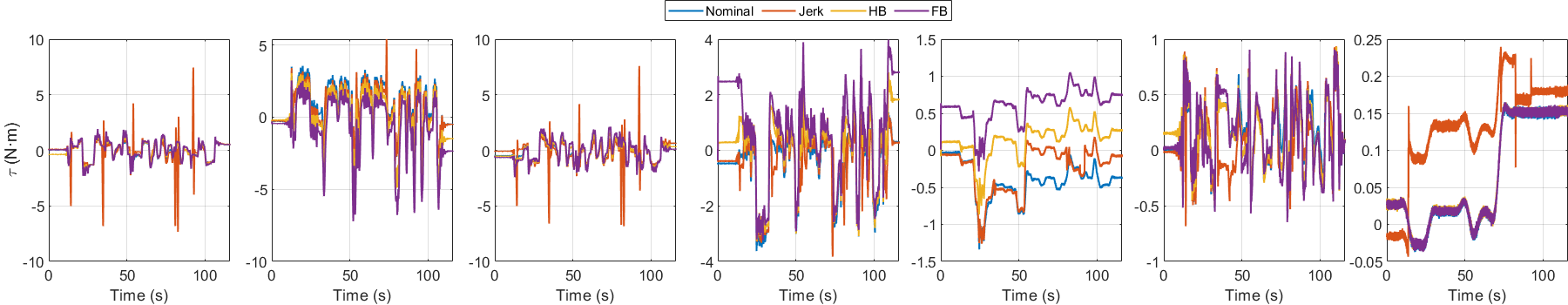}
    \caption{Torque Input to 7-DOF Manipulator.}
    \label{fig:confranka}
\end{figure}

\begin{figure}[t]
    \centering
    \includegraphics[width=0.7\textwidth]{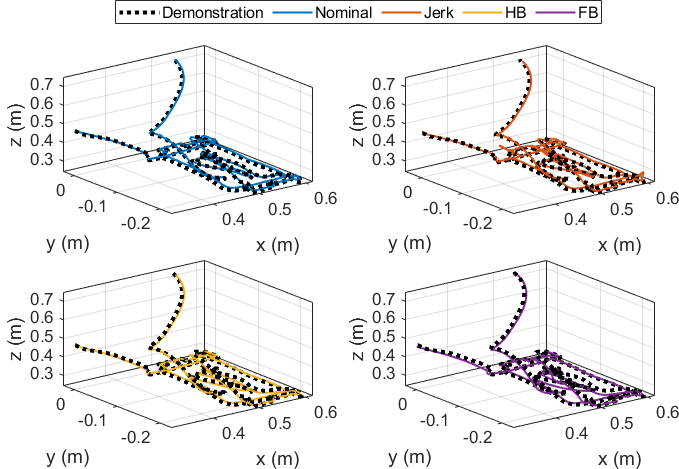}
    \caption{End Effector Trajectory of 7-DOF Manipulator.}
    \label{fig:eefranka}
\end{figure}

This case study involves a Franka Emika Research 3 robot, a lightweight 7-degree-of-freedom manipulator designed for safe human-robot collaboration \cite{FRANKA}. For this study, the robot’s maximum joint velocity and torque limits were set to $\overline{v} = 6$ rad/s and $\overline{\tau} = 8$ N-m, respectively \cite{FRANKA2, FRANKA3} for all the joints.

A kinesthetic demonstration was conducted starting from the robot’s default home configuration. During the demonstration, the operator physically guided the arm through the desired trajectory while joint positions were continuously recorded. STTs for each of the seven joints were then learned parallely with $\lambda = 1.2$. The resulting tubes are shown in Figure~\ref{fig:tubefranka}.

The approximation-free, feedback control strategy in Equation~\eqref{eqn:con} was subsequently deployed to replicate the demonstrated motion. To evaluate the system’s robustness, we conducted experiments under three conditions: (i) nominal operation without disturbances, (ii) altered dynamics by attaching either a half-filled or a fully filled water bottle to one of the links, and  (iii) external disturbances introduced by applying sudden jerks during task execution. Figure~\ref{fig:confranka} shows the resulting joint torques for each scenario. Hardware demonstration videos for these experiments are available at Link.

\end{document}